\theoremstyle{plain}
\newtheorem{theorem}{Theorem}[section]
\newtheorem{lemma}[theorem]{Lemma}
\theoremstyle{definition}
\theoremstyle{remark}
\newcommand{\norm}[1]{\left\lVert#1\right\rVert}
\icmltitlerunning{SbF-Pruner: End-to-End Sensitivity-Based Filter Pruning}
\begin{document}

\twocolumn[
\icmltitle{End-to-End Sensitivity-Based Filter Pruning}

% It is OKAY to include author information, even for blind
% submissions: the style file will automatically remove it for you
% unless you've provided the [accepted] option to the icml2022
% package.

% List of affiliations: The first argument should be a (short)
% identifier you will use later to specify author affiliations
% Academic affiliations should list Department, University, City, Region, Country
% Industry affiliations should list Company, City, Region, Country

% You can specify symbols, otherwise they are numbered in order.
% Ideally, you should not use this facility. Affiliations will be numbered
% in order of appearance and this is the preferred way.
\icmlsetsymbol{equal}{*}

\begin{icmlauthorlist}
\icmlauthor{Zahra Babaiee}{xxx}
\icmlauthor{Lucas Liebenwein}{yyy}
\icmlauthor{Ramin Hasani}{yyy}
\icmlauthor{Daniela Rus}{yyy}
\icmlauthor{Radu Grosu}{xxx}
\end{icmlauthorlist}

\icmlaffiliation{xxx}{Computer Science Department, Technical University of Vienna, Vienna, Austria}
\icmlaffiliation{yyy}{Electrical Engineering and Computer Science Department, MIT, Boston, USA}

\icmlcorrespondingauthor{Zahra Babaiee}{zahra.babaiee@tuwien.ac.at}

% You may provide any keywords that you
% find helpful for describing your paper; these are used to populate
% the "keywords" metadata in the PDF but will not be shown in the document
\icmlkeywords{Machine Learning, ICML}

\vskip 0.3in
]

% this must go after the closing bracket ] following \twocolumn[ ...

% This command actually creates the footnote in the first column
% listing the affiliations and the copyright notice.
% The command takes one argument, which is text to display at the start of the footnote.
% The \icmlEqualContribution command is standard text for equal contribution.
% Remove it (just {}) if you do not need this facility.

%\printAffiliationsAndNotice{}  % leave blank if no need to mention equal contribution
%\printAffiliationsAndNotice{\icmlEqualContribution} % otherwise use the standard text.

\begin{abstract}
% Inspired by Attention, we devise a pruning-score layer that can be trained with the remaining weights of the network to simultaneously prune and train the network.
In this paper, we present a novel sensitivity-based filter pruning algorithm (SbF-Pruner) to learn the importance scores of filters of each layer end-to-end. Our method learns the scores from the filter weights, enabling it to account for the correlations between the filters of each layer. Moreover, by training the pruning scores of all layers simultaneously our method can account for layer interdependencies, which is essential to find a performant sparse sub-network. Our proposed method can train and generate a pruned network from scratch in a straightforward, one-stage training process without requiring a pre-trained network. Ultimately, we do not need layer-specific hyperparameters and pre-defined layer budgets, since SbF-Pruner can implicitly determine the appropriate number of channels in each layer. Our experimental results on different network architectures suggest that SbF-Pruner outperforms advanced pruning methods. Notably, on CIFAR-10, without requiring a pretrained baseline network, we obtain 1.02\% and 1.19\% accuracy gain on ResNet56 and ResNet110, compared to the baseline reported for state-of-the-art pruning algorithms. This is while SbF-Pruner reduces parameter-count by 52.3\% (for ResNet56) and 54\% (for ResNet101), which is better than the state-of-the-art pruning algorithms with a high margin of 9.5\% and 6.6\%. 
\end{abstract}

\section{Introduction}

Convolutional Neural Networks (CNNs)~\cite{doi:10.1162/neco.1989.1.4.541} are widely used in various deep learning computer vision tasks. Large CNNs achieve considerable performance levels, but with significant computing, memory, and energy footprints \cite{sui2021chip}. These models are dense and over-parameterized. As a consequence, they cannot be effectively used in resource-limited environments such as mobile or embedded devices. Hence, it's crucial to create smaller models that can perform well without significantly sacrificing their accuracy and performance. This goal can be accomplished either through designing smaller network architectures \cite{lechner2020neural, pmlr-v97-tan19a} or through training an over-parameterized network and sparsifying it by pruning its redundant parameters \cite{han2016deep,liebenwein2020provable,liebenwein2021sparse}.

Neural network pruning is defined as systematically removing parameters from an existing neural network~\cite{hoefler2021sparsity}. It is a popular technique to reduce the growing energy and performance costs of neural networks and make it feasible to deploy them in resource-constrained environments such as smart devices. Various approaches have been developed to perform pruning as this has gained considerable attention over the past few years \cite{zhu2017prune, sui2021chip,liebenwein2021sparse,peste2021ac,frantar2021efficient,deng2020model}. 

\begin{figure}[t]
    \centering
    \vspace*{5ex}
    \includegraphics[scale=1.2]{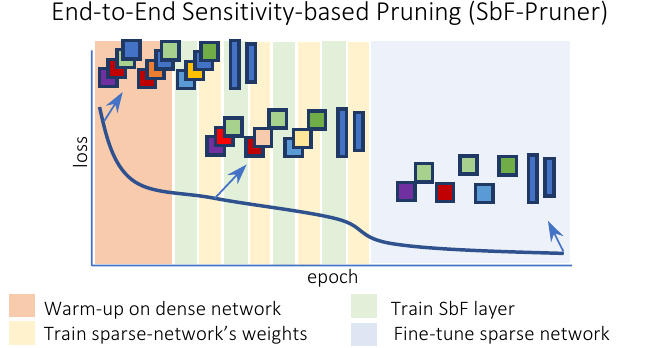}
    \caption{Sensitivity-based filter pruning schedule.}
    \label{fig:fig1}
\end{figure}
Pruning methods can be categorised into structured and unstructured. Unstructured methods reduce the size by removing individual weight parameters~\cite{han2016deep}, and structured methods remove parameters in groups, by pruning neurons, filters, or channels~\cite{10.1145/3005348, li2019learning, he2018soft, liebenwein2020provable}. Since our hardware is tuned for dense computations, structured pruning offers a favorable balance between accuracy and performance, providing more computational speedups~\cite{hoefler2021sparsity}.

Filter pruning is a prominent family of structured methods for CNNs. Choosing which filters to remove is the essential part of any filter pruning method. There are several different approaches for sorting the filters by a metric of importance in order to select the least important ones to remove. 

Various methods rely on the network structure and weights alone to determine the important filters rather than the training data. Magnitude pruning, for example, is one of the simplest and most common data-free methods, which prunes the filters with the smallest $l1$ norm of the weights. 
\begin{figure*}[t]
\centering
   \includegraphics[width=0.9\textwidth]{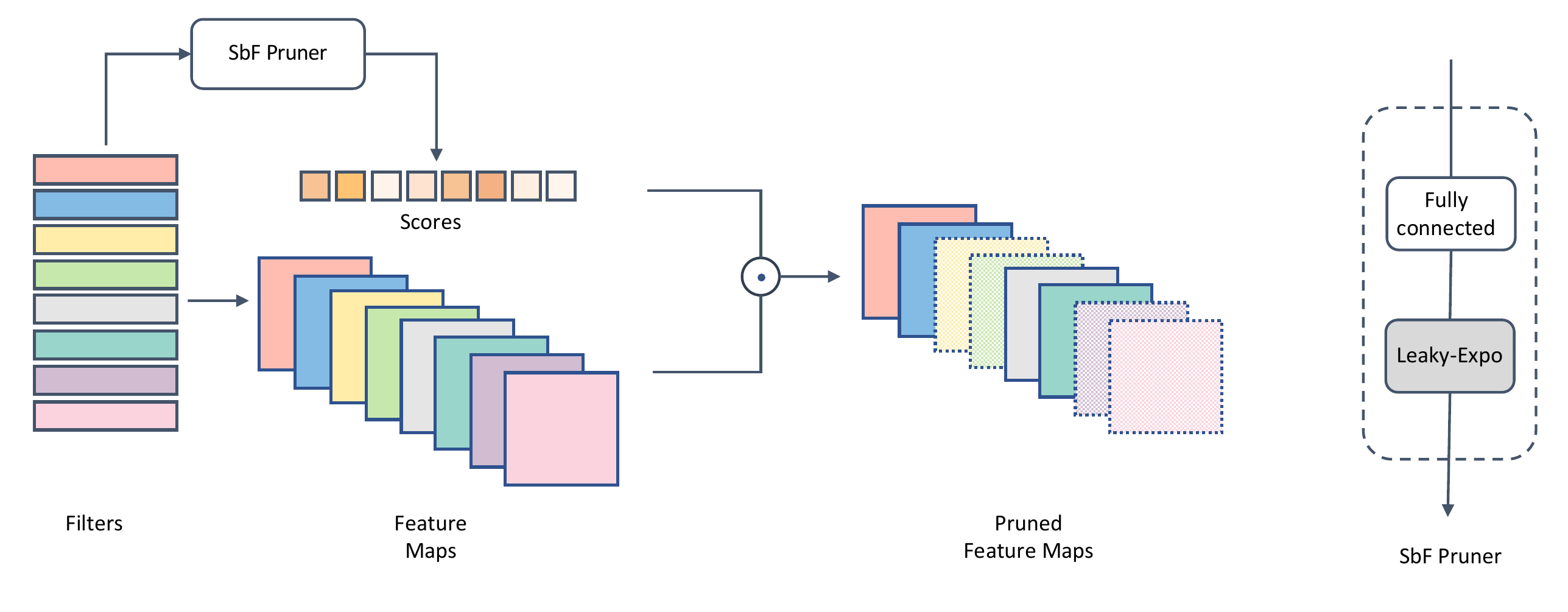}
   \vspace*{-1ex}
   \caption{The SbF-Pruner learns the importance scores of the filters from the filter weights.}
   \label{fig:att_pruner}
   % \vspace*{-3ex}
\end{figure*}
Many recent works introduce data-informed methods that focus on the feature maps generated from the training data (or a subset of samples) rather than the filters alone. These methods vary from sensitivity-based approaches, which consider the statistical sensitivity of the output feature maps to the input data~\cite{Malik:816737, liebenwein2020provable}, to correlation-based methods, with an inter-channel perspective, to keep the least similar or correlated feature maps~\cite{sun2015sparsifying, sui2021chip}.

Both data-free and data-informed methods often rely on finding the importance of filters of each layer locally. However, the importance of a filter changes relatively to the selection of the filters in previous and next layers. It is challenging to develop methods that can globally determine the importance of filters since CNNs should be considered as a whole. Moreover, a vital element of any pruning method is determining the optimal budget for each layer, which is a problem that all locally-determining importance-metric techniques face. The most trivial way to overcome these issues is to evaluate the network loss with and without each combination of $k$ candidate filters out of $N$. However, this approach requires the evaluation of ${N\choose k}$ subnetworks. It is clear that such computation is not practically feasible.

Training-aware pruning methods aim to learn binary masks to turn on and off each filter. A regularization metric often accompanies them to add a penalty for guiding the masks to the desired budget. Mask learning simultaneously for all filters is an effective method for identifying a globally optimal subset of the network. However, due to the discrete nature of the filters and binary values of the masks, the optimization problem is often non-convex and NP-hard. A simple trick used by many recent works~\cite{DMC2020,NPPM2021,DBLP:journals/corr/abs-2112-03406} is to use Straight-Through Estimators~\cite{STEBengio} in order to calculate the derivatives of the binary functions as they are identity functions. 

In this paper, we introduce a novel end-to-end pruning technique that simultaneously trains and prunes the network. Instead of learning a binary mask, we train continuous scores from the filter weights with a sensitivity-based filter-pruning (SbF-Pruner) mechanism. This way, we allow the importance scores to be learned via gradient descent to obtain a sparsified network. SbF-Pruner obtains Filter scores through a specialized activation function and automatically computes pruned feature maps as a result of the product of the filter sensitivity scores with feature maps of each layer.
Our SbF-Pruner pipeline is symbolically shown in Figure~\ref{fig:att_pruner}.

\paragraph{Our main contributions:}
\begin{itemize}
    \item We introduce the SbF-Pruner, \textit{an end-to-end algorithm that learns the importance scores directly from the network weights and filters}. Our method allows to extract hidden patterns in the filter weights for training the scores, rather than relying only on the weight magnitudes. The feature maps are multiplied by our learned scores during training. This way \textit{our method implicitly accounts for the data samples through loss propagation}, enabling our SbF-Pruner to enjoy the advantages of both data-free and data-informed methods.
    
    \item Our SbF-Pruner \textit{automatically calculates global importance scores for all filters and determines layer-specific budgets} with only one global hyper-parameter.
    
    \item We empirically investigate the pruning performance of our SbF-Pruner in various pruning tasks and compare it to advanced state-of-the-art pruning algorithms. \textit{Our method proves to be competitive and yield higher pruning ratios while preserving higher accuracy}.
\end{itemize}

\section{End-to-End Sensitivity-Based Filter Pruner}
In this section, we first denote our notation and procedurally construct our pruning algorithm.

\subsection{Notation}
To better describe our approach, we first introduce some useful notation. The filter weights of layer $l$ are represented by $\mathcal{F}_l\,{\in}\,{\rm I\!R}^{F \times C\times K \times K}$ where $F$ is the number of filters, $C$ the number of input channels, and $K$ the convolutional kernel size. The feature maps of layer $l$ are given by $A_l\in {\rm I\!R}^{F\times H \times W}$ where $H$ and $W$ are the image height and width respectively. For simplicity, we ignore the batch dimension.

\subsection{Learning the Scores}
The SbF-Pruner can be regarded as an independent layer, whose inputs are the layer filter weights and the outputs are the scores associated with each filter. The pruner layer is a linear transformation of the layer weights to a single vector whose length equals the number of filters of the layer, followed by an activation function which guarantees that the scores are between $0$ and $1\,{+}\,\epsilon$. We will explain in detail the choice of the activation function in the next section. 
\begin{equation}
\centering
\label{eq:scores}
S_{l} = \phi(\mathcal{F}_l\ W^{\mathcal{F}})
\end{equation}
Here $\phi$ is the sbf-activation function and the projection is a parameter matrix $W^{\mathcal{F}} \in {\rm I\!R}^{(F \times C\times K \times K) \times F}$.

The scores learned by the pruner layer are then pointwise (${\odot}$) multiplied by the feature maps of the same layer.

\begin{equation}
\centering
\label{eq:scores_mul}
A_{l}^{'} = S_{l}(\mathcal{F}_{l})\,{\odot}\,A_{l}
\end{equation}

While training the scores, we do not discretize them to binary values; instead, their continuous values are directly multiplied by the feature maps. The closer the filter score is to $1$, the more the corresponding feature map is preserved.

\subsection{The SbF-Pruner Activation Function}
SoftMax is the typical choise of activation function in additive attention for computing importance \cite{vaswani2017attention}. However, SoftMax is not a suitable choice for filter scores. While the range of its outputs is between $0$ and $1$, the sum of its outputs has to be $1$, meaning that all scores will have small values, or there would be only one score close to $1$.

In contrast to SoftMax, we would intuitively want that many filter scores could possibly be close to $1$. More formally, the scores should have the following three main attributes:
\begin{enumerate}
    \item All filter scores should have a positive value that ranges between $0$ and $1$, as is the case in SoftMax.
    \item All filter scores should adapt from their initial value of $1$, as we start with a completely dense model.
    \item The filter-scores activation function should have non-zero gradients over their entire input domain.
\end{enumerate}
Sigmoidal activations satisfy Attributes $1$ and $3$. However, they have difficulties with Attribute $2$. For high temperatures, sigmoids behave like steps, and scores quickly converge to $0$ or $1$. The chance these scores change again is very low, as the gradient is close to zero in these points.  Conversely, for low temperatures, scores have a hard time to converge to $0$ or $1$. Finding the optimal temperature needs extensive search for each of the layers separately. Moreover, starting from a dense network with all scores set to $1$ is not feasible with sigmoids.

To satisfy Attributes $1$-$3$, we designed our own activation function, as shown in Figure~\ref{fig:leaky_expo}. First, in order to ensure that scores are positive, we use an exponential activation function, and learn its logarithm values. Second, we allow the activation to be leaky, by not saturating it at $1$, as this would result in $0$ gradients, and scores getting stuck at $1$. Formally, our leaky-exponential activation function is defined as follows, where $a$ is a small value.:
\begin{equation}
\centering
\label{eq:activation_func}
\phi(x) = 
     \begin{cases}
       e^x &\quad\text{if $x<0$} \\ 
       1.0+ax &\quad\text{if $x\ge0$}
     \end{cases}
\end{equation}
%
%Figure~\ref{fig:leaky_expo} shows the leaky-exponential function plotted.
%
\begin{figure}
\centering
   \includegraphics[width=0.3\textwidth]{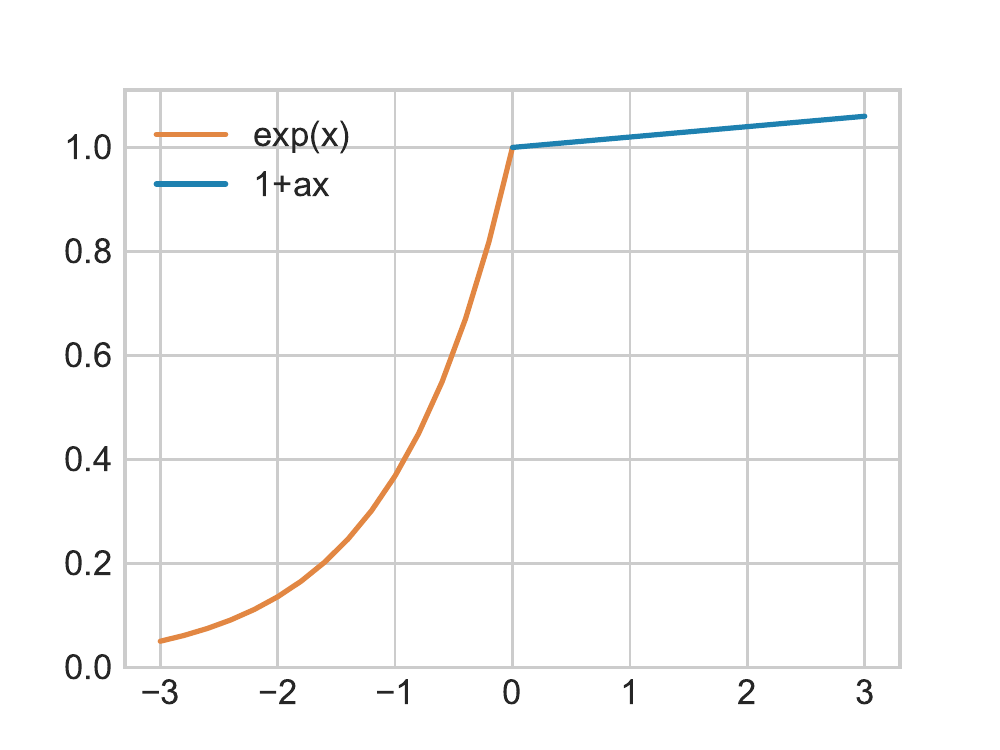}
   %\vspace*{-1ex}
   \caption{The leaky-exponential function used as the activation function for the SbF-Pruner layers.}
    \label{fig:leaky_expo}
    \vspace*{-4ex}
\end{figure}

\subsection{The Optimization Problem}
The network filter-pruning optimization problem can be formulated as in Equation~\eqref{eq:loss}, were $\mathcal{L}$ is the loss function, $f$ is the output function of the network with inputs $x$ and labels $y$, $\mathcal{W}$ are the network parameters, and $S$ are the pruner-layers parameters. By $\|g(S_l)\|_{1}$ we denote the $l_1$-norm of the binarized filter scores of layer $l$, and by $F_l$ the number of filters of layer $l$. Finally, $p$ is the pruning ratio. 

\begin{equation}
\centering
\label{eq:loss}
\begin{split}
\min_{S} \mathcal{L}(y, f(x; \mathcal{W}, S))\qquad\\
s.t.\ \ \sum_{l=0}^{L}\|g(S_l)\|_{1} - p\sum_{l=0}^{L}F_l = 0
\end{split}
\end{equation}

To address the pruning budget constraint, we add a regularization factor to the loss function while training the pruner layers. We add the $l_1$-norm of the continuous scores calculated for the filters of each layer to the classification loss:
\begin{equation}
\centering
\label{eq:regularized_loss}
\mathcal{L}^{'}(S) = \mathcal{L}(y, f(x; \mathcal{W}, S)) + \lambda \sum_{l=0}^{L}\|S_l\|_{1}
\end{equation}
Here $\lambda$ is a global hyper-parameter controlling the regularizer's effect on the total loss. With the loss function consisting of the classification and the $l_1$-norm of the scores, the pruning effect of each filter is reflected in the loss function. 

By incorporating the classification and $l_1$-norm of the scores in the loss function, the effect of the score of each filter is accounted for in the loss value in two ways. 
\begin{enumerate}
    \item When multiplied by small scores, feature maps have less influence on classification. This may result in a larger loss if these maps are important and vice versa.
    \item On the other hand, the part of the loss function consisting of the $l_1$-norm of the scores, decreases the loss value when there are more scores with small values.
\end{enumerate}
In some sense, this duality mimics the way synapses are pruned or strengthened through habituation or conditioning in nature. If the use of a synapse does not lead to a reward it is pruned, otherwise it is strengthened. Moreover, pruning is essential in order to avoid saturation and enable learning.

The value of the parameter $\lambda$ plays a very important role in keeping the balance between the two factors, and in controlling the pruning ratio. When $\lambda$ is set to $0$, all scores will stay at values close to $1$, and when it is set to a large value,  all scores are forced to eventually converge to $0$. 

During training of the SbF-Pruner layers, we freeze all other network parameters. The scores are directly learned from the filter weights, reflecting attributes such as the filter magnitude and the correlation among the different filters of the same layer. A globally pruned network is obtained by simultaneously training the scores of all layers. This allows to propagate the scored influence of the feature maps of one layer to the feature maps and scores of the next layers.

\subsection{The Training schedule}

We start training the network by a number of warm-up epochs. During the warm-up phase we train the dense network, with all filter scores fixed to $1$ and the pruner layer weights frozen. After the warm-up phase, we start training the pruner layers as well as the network weights.

We multiply the feature maps by the continues score values. With this approach, when a score is $1$, the corresponding feature map is fully preserved. As the score gets smaller, its feature map is getting weaker since it is multiplied by a value smaller than $1$, and finally almost completely pruned as the score gets closer to $0$. However, while the scores are getting trained, the network weights can learn to adapt to the feature map intensity. This means that, although a filter may have a low score, the loss of the feature map intensity can be compensated by magnifying the feature map itself. In order to prevent this, we use an alternating training approach \cite{peste2021ac}. 

We train the pruner layers for a few epochs, while all other network parameters are frozen. Then we alternate for training the sparse network weights and during these few epochs we freeze all pruner layer parameters. We multiply the feature maps to their learned scores so far, but we switch to discrete binary values of scores during the sparse network training phases. The binary values are obtained from the continuous scores with a gate function as defined bellow, where $s$ is the non-binary score value:
\begin{equation}
\centering
\label{eq:binary_scores}
g(s) = 
     \begin{cases}
       $1$ &\quad\text{if $s$}\ge0.5\\
       $0$ &\quad\text{otherwise.} \\ 
     \end{cases}
\end{equation}
After the pruner training phase is completed, we complete the training procedure with fine-tuning the sparse model. 

Our proposed method does not necessarily need a pre-trained dense model to find the importance of filters or feature maps. Even when starting from the scratch and after a warm-up phase, it can find an optimal sparse sub-network by training the weights and pruner-layers simultaneously.

\subsection{The Algorithm}
In previous secions, we introduced the basic concepts of the SbF-Pruner. Algorithm~\ref{alg:1} outlines the SbF pruning algorithm for the scores training stage. When training the parameters of the SbF-Pruner layers, Equation~\eqref{eq:activation_func} is used as the activation function, and during training the rest of the network parameters, we use Equation~\eqref{eq:binary_scores}.
\begin{algorithm}[t]
\caption{Sensitivity-Based Filter Pruning}
\label{alg:1}
\begin{algorithmic}
\STATE \textbf{Inputs:} mini batches $\mathcal{B}$, Network $N$ parametrized by $\mathcal{W}$ trained for number of warm-up epochs, SBF-Pruner layers parametrized by $S$, Regularization hyper-parameter $\lambda$, Pruning-training epochs $E_p$, Scores training phase mini-epochs $E_s$, Fine-tuning epochs $E_f$.
\STATE \textbf{Output:} Pruned Network with Scores for Fine-tuning
\FOR{$j$ in $E_p$}
\FOR{$j$ in $E_s$}
\FOR{$b$ \textbf{in} $\mathcal{B}$}
\STATE forward calculation:\\
\quad $\min_{S} \mathcal{L}(y, f(x; \mathcal{W}, S)) + \lambda \sum_{l=0}^{L}\|S_l\|_{1}$
\STATE calculate gradients w.r.t $S$.
\STATE Update $S$ by optimizer.
\ENDFOR
\ENDFOR
\FOR{$j$ in $E_s$}
\FOR{$b$ \textbf{in} $\mathcal{B}$}
\STATE forward calculation:\\
\quad $\min_{\mathcal{W}} \mathcal{L}(y, f(x; \mathcal{W}, S))$
\STATE calculate gradients w.r.t $\mathcal{W}$.
\STATE Update $\mathcal{W}$ by optimizer.
\ENDFOR
\ENDFOR
\ENDFOR

\STATE \textbf{Return} $N_\mathcal{W}$
\end{algorithmic}
\end{algorithm}

\subsection{Prunner's Effect on Iterative Inference in ResNets}
In this section, we proceed with an analysis of the effect that the SbF-Pruner has on the iterative feature-refinement process in residual networks \cite{greff2016highway}. 

ResNets \cite{he2016identity} are known to enhance representation learning in deeper layers via an iterative feature-refinement scheme \cite{greff2016highway}. This scheme suggests that input features to a ResNet do not create new representations. Rather, they gradually and iteratively refine the learned features of the initial residual blocks \cite{jastrzebski2018residual}. Iterative refinement of features has been shown to be necessary for obtaining attractive levels of performance with ResNet, while their disruption hurts performance. 

As the SbF-Pruner layer modifies the underlying model structure and the feature maps of a residual block, it is very important to investigate if the SbF-Pruner preserves the iterative feature-refinement property of ResNets.

To make this analysis precise, let us first formalize iterative inference as discussed in~\cite{jastrzebski2018residual}: A residual block $i$ in a ResNet with $M$ blocks, performs for the input feature $\textbf{x}_i$ the following transformation: $\textbf{x}_{i+1} = \textbf{x}_i\,{+}\,f_i(\textbf{x}_i)$. Hence, the following loss function $\mathcal{L}$ can be recursively applied to the network \cite{jastrzebski2018residual}:
\begin{equation}
     \mathcal{L}(\textbf{x}_{M}) = \mathcal{L}(\textbf{x}_{M-1} + f_{M-1}(\textbf{x}_{M-1})).
\end{equation}
A first-order Taylor expansion of this loss, while ensuring that $f_{j}$'s magnitude is small, is a good approximation to formally investigating the iterative inference \cite{jastrzebski2018residual}. We thus obtain the following:
\begin{equation}
    \mathcal{L}(\textbf{x}_{M}) = \mathcal{L}(\textbf{x}_{i}) + \sum^{M-1}_{j=i} f_{j}(\textbf{x}_{j}). \frac{\partial \mathcal{L}(\textbf{x}_{j})}{\textbf{x}_{j}} + \mathcal{O}(f^2_{j}(\textbf{x}_{j})).
    \label{eq:loss_extention}
\end{equation}
This approximation reveals that the $i$-th residual block, modifies features $\textbf{x}_{i}$ with roughly the same amount of $f_{i}(\textbf{x}_{i})$ as that of $\frac{\partial \mathcal{L}(\textbf{x}_{i})}{\textbf{x}_{i}}$. This implies a moderate reduction of loss as we transition from the $i$-th to  the $M$-th block, as an iterative refinement scheme~\cite{greff2016highway,jastrzebski2018residual}. The refinement step for a vanilla residual block can be computed by the squared norm of $f_{i}(\textbf{x}_{i})$, and can be normalized to the input feature as: $\norm{f_{i}(\textbf{x}_{i})}^{2}_{2}{/}\norm{\textbf{x}_{i}}^{2}_{2}$. 

Any modification to the structure of the residual network (e.g., if we use the SbF-Pruner) causes a change in the refinement step. This step has to be investigated if it does or it does not modify the iterative inference scheme.

\begin{lemma} 
\label{lem:1}
The iterative feature-refinement scale is bounded for ResNets with SbF-Pruner as follows, with parameter $\epsilon$ from the leaky integrator and $0\,{<}\,\delta\,{\leq}\,1\,{+}\,\epsilon$:
\begin{equation}
\label{eq:bounds}
\hspace*{-2ex}\frac{\delta}{1+\epsilon} \frac{\norm{f_{i}(\textbf{x}_{i})}^{2}_{2}}{\norm{ \textbf{x}_{i}}^{2}_{2}}\,{\leq}\,\frac{\norm{S_i\,{\odot}\,f_{i}(\textbf{x}_{i})}^{2}_{2}}{\norm{S_i \odot \textbf{x}_{i}}^{2}_{2}}\,{\leq}\,\frac{1+\epsilon}{\delta} \frac{\norm{f_{i}(\textbf{x}_{i})}^{2}_{2}}{\norm{ \textbf{x}_{i}}^{2}_{2}} 
\end{equation}
\end{lemma}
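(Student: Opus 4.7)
The plan is to reduce the bracket in the lemma to an elementary two-sided inequality for ratios of weighted sums, where the weights are the squared channel-scores supplied by $S_i$.

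First I would rewrite both sides of the central ratio channel-by-channel. Since $S_i$ assigns a single scalar score to each channel, if $s_{i,c}$ denotes the score of channel $c$ and $f_{i,c}(\mathbf{x}_i)$, $x_{i,c}$ denote the corresponding channel slices, then
\begin{equation}
\|S_i \odot f_i(\mathbf{x}_i)\|_2^2 = \sum_c s_{i,c}^2\,\|f_{i,c}(\mathbf{x}_i)\|_2^2,\qquad
\|S_i \odot \mathbf{x}_i\|_2^2 = \sum_c s_{i,c}^2\,\|x_{i,c}\|_2^2,
\end{equation}
while without the scores the same expansion just drops $s_{i,c}^2$ and gives $\|f_i(\mathbf{x}_i)\|_2^2 = \sum_c \|f_{i,c}(\mathbf{x}_i)\|_2^2$ and $\|\mathbf{x}_i\|_2^2 = \sum_c \|x_{i,c}\|_2^2$. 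So the lemma is really a statement about how inserting common channel-weights $w_c = s_{i,c}^2$ into numerator and denominator can distort a ratio.

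Second, I would use the range of the leaky-exponential activation in Equation~\eqref{eq:activation_func}: $\phi$ maps $\mathbb{R}$ into $(0,\,1+\epsilon]$, so every $s_{i,c}$ lies in that interval, and by the lemma's hypothesis it is bounded below by $\delta>0$. Thus each weight satisfies $w_c \in [\delta,\,1+\epsilon]$ in the authors' normalization. Then I would invoke the classical weighted-ratio inequality: whenever $a_c,b_c\geq 0$ and $w_c\in[\alpha,\beta]$ with $\alpha>0$,
\begin{equation}
\frac{\alpha}{\beta}\,\frac{\sum_c a_c}{\sum_c b_c}\;\leq\;\frac{\sum_c w_c a_c}{\sum_c w_c b_c}\;\leq\;\frac{\beta}{\alpha}\,\frac{\sum_c a_c}{\sum_c b_c},
\end{equation}
each half being proved by bounding numerator and denominator separately ($\sum w_c a_c\leq\beta\sum a_c$ paired with $\sum w_c b_c\geq\alpha\sum b_c$, and symmetrically for the lower bound). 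Specializing with $\alpha=\delta$, $\beta=1+\epsilon$, $a_c=\|f_{i,c}(\mathbf{x}_i)\|_2^2$, $b_c=\|x_{i,c}\|_2^2$ gives Equation~\eqref{eq:bounds} in one line.

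The main obstacle I expect is bookkeeping rather than mathematics: making sure $\delta$ is treated as a genuinely strict positive lower bound (which is justified because $\phi(x)=e^x>0$ on the left branch and $\phi(x)=1+ax\geq 1$ on the right branch, so no score is ever exactly zero), so that $\|S_i\odot \mathbf{x}_i\|_2^2$ in the denominator is nonzero and the ratio is well-defined. Secondary care is needed to align the Hadamard product $S_i\odot \cdot$ with the channel decomposition, but once $s_{i,c}$ is recognized as constant across the spatial dimensions of channel $c$ this is immediate. No spectral, convexity, or smoothness arguments on $f_i$ are needed; only positivity of the scores and the boundedness supplied by the activation.
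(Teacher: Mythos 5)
Your decomposition is, at bottom, the same argument the paper gives: bound the numerator from above using the largest score and the denominator from below using the smallest score, then combine. The paper phrases this as $\norm{S_i\odot f_i(\textbf{x}_i)}\leq\norm{S_i}_\infty\norm{f_i(\textbf{x}_i)}\leq(1+\epsilon)\norm{f_i(\textbf{x}_i)}$ and $\norm{S_i\odot\textbf{x}_i}\geq\delta\norm{\textbf{x}_i}$; your channel-wise weighted-sum inequality is exactly this estimate written out coordinate by coordinate, so the route is not genuinely different.

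There is, however, one concrete gap, and it sits precisely where you wrote ``in the authors' normalization.'' The hypothesis (yours and the paper's alike) is that every \emph{score} $s_{i,c}$ lies in $[\delta,1+\epsilon]$. Your weights are $w_c=s_{i,c}^2$, which therefore lie in $[\delta^2,(1+\epsilon)^2]$, not in $[\delta,1+\epsilon]$. Feeding that into your (correct) weighted-ratio inequality yields the factor $\bigl((1+\epsilon)/\delta\bigr)^{2}$, not $(1+\epsilon)/\delta$, and this is essentially tight: take two channels with $s_{i,1}=1+\epsilon$, $s_{i,2}=\delta$, with $f_i(\textbf{x}_i)$ concentrated on channel $1$ and $\textbf{x}_i$ concentrated on channel $2$. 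So as written your argument proves the lemma only with the squared constants; you should either state the conclusion with $\bigl((1+\epsilon)/\delta\bigr)^{\pm 2}$ or redefine $\delta$ and $1+\epsilon$ to be bounds on the \emph{squared} scores. For what it is worth, the paper's own proof carries the identical blemish --- it bounds the unsquared norms by $(1+\epsilon)$ and $\delta$ and then writes the squared inequality of Equation~\eqref{eq:bounds} with unsquared constants --- so your proof is no weaker than the one in the paper, but neither derivation establishes Equation~\eqref{eq:bounds} exactly as stated.
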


\begin{proof} A ResNet block $i$ that is equipped with an SbF-Pruner layer, transforms the features $\textbf{x}_{i}$ with the following expression: $\textbf{x}_{i+1}\,{=}\,S_i\,{\odot}\,(\textbf{x}_i\,{+}\,f_i(\textbf{x}_i)$), where $S_i$ stands for the score vector computed by the SbF-Pruner. 
The refinement step is given by $\norm{S_i\,{\odot}\,f_{i}(\textbf{x}_{i})}^{2}_{2}$ and its input-normalized representation is $\norm{ S_i\,{\odot}\,f_{i}(\textbf{x}_{i})}^{2}_{2}/\norm{S_i\,{\odot}\, \textbf{x}_{i}}^{2}_{2}$.

\textit{Deriving the upper bound:} Assuming that every element in $S_i$ is between $\delta$ and $1+\epsilon$, for $0\,{<}\,\delta\,{\leq}\,1\,{+}\,\epsilon$, we have:
\begin{equation}
\norm{S_i\,{\odot}\,f_i(\textbf{x}_{i})}\,{\leq}\,\norm{S_i}_\infty \norm{ f_i(\textbf{x}_{i})}\,{\leq}\, (1+\epsilon) \norm{ f_i(\textbf{x}_{i})}
\end{equation}
\vspace*{-3ex}
\begin{equation}
\norm{S_i\,{\odot}\,\textbf{x}_{i}}\,{\geq}\,\norm{S_i}_{min} \norm{\textbf{x}_{i}}\,{\geq}\, \delta \norm{ \textbf{x}_{i}}. \end{equation}

As a consequence, the following upper-bound inequality holds for the iterative inference: 
\begin{equation}
\label{eq:itpU}
\frac{\norm{S_i\,{\odot}\,f_{i}(\textbf{x}_{i})}^{2}_{2}}{\norm{S_i \odot\textbf{x}_{i}}^{2}_{2}} \leq \frac{1+\epsilon}{\delta} \frac{\norm{f_{i}(\textbf{x}_{i})}^{2}_{2}}{\norm{ \textbf{x}_{i}}^{2}_{2}} 
\end{equation}

\textit{Deriving the lower bound:} Assuming that every element in $S_i$ is between $\delta$ and $1+\epsilon$, we have:
\begin{equation}
\norm{S_i\,{\odot}\,f_i(\textbf{x}_{i})}\,{\geq}\,\delta \norm{ f_i(\textbf{x}_{i})}
\end{equation}
\vspace*{-3ex}
\begin{equation}
\norm{S_i\,{\odot}\,\textbf{x}_{i}}\,{\leq}\,(1+\epsilon) \norm{ \textbf{x}_{i}}. 
\end{equation}

As a consequence, the following lower-bound inequality holds for the iterative inference: 
\begin{equation}
\label{eq:itpL}
\frac{\norm{S_i\,{\odot}\,f_{i}(\textbf{x}_{i})}^{2}_{2}}{\norm{S_i \odot\textbf{x}_{i}}^{2}_{2}} \geq \frac{\delta}{1+\epsilon} \frac{\norm{f_{i}(\textbf{x}_{i})}^{2}_{2}}{\norm{ \textbf{x}_{i}}^{2}_{2}} 
\end{equation}

Inequalities~\eqref{eq:itpU} and~\eqref{eq:itpL} prove the the stated lemma.
\end{proof}

Lemma \ref{lem:1} has profound implications in practice. It indicates that the iterative-inference property of the ResNets equipped with an SbF-Pruner is both lower and upper bounded. These ResNets not only get compressed in size, but also preserve the representation learning capabilities of ResNets between these two bounds. The bounds themselves can be fine tuned with the parameter $\lambda$ of Equation~\eqref{eq:regularized_loss}.

\section{Experiments}
In this section, we first present our implementation details and then discuss our experimental results.

\textbf{Baselines.} We compare our method to numerous standard and advanced pruning methods. The models include: L1 Norm \cite{li2017pruning}, Neuron Importance Score Propagation (NISP) \cite{nisp2018}, Soft Filter Pruning (SFP)~\cite{SFP2018}, Discrimination-aware Channel Pruning (DCP)~\cite{DCP2018}, DCP-Adapt~\cite{DCP2018},  Collaborative  Channel  Pruning (CCP)~\cite{CCP2019}, Generative Adversarial Learning (GAL)~\cite{GAL2019}, Filter Pruning using High-rank feature map (HRank)~\cite{HRank2020}, Discrete model compression (DMC)~\cite{DMC2020},  Network pruning via performance maximization (NPPM)~\cite{NPPM2021}, Channel independence-based pruning (CHIP)~\cite{sui2021chip}, and Auto-ML for Model Compression (AMC)~\cite{AMC2018}.

\subsection{Implementation Details}

\textbf{Training Procedure.} We conducted experiments on CIFAR-10 dataset with two different models, ResNet56 and ResNet110. For both models, we do not use a fully trained network as the baseline to prune. We train each model for 50 warm-up epochs. During the warmup (See Figure \ref{fig:fig1}), we use batch size of $256$ and Stochastic Gradient Descent (SGD) as optimizer, with 0.1 as initial learning rate, 0.9 as momentum, and 0.0005 for the weight decay. 

After the warm-up stage, we start training all layers of the SbF-Pruner, alternatively, with the rest of network parameters. We train the scores with the SbF-Pruner layers for 3 epochs, while all other parameters are frozen. We add the regularized loss defined in Equation~\eqref{eq:loss_extention} when we train the scores. Then we switch to training the network weights for 6 epochs, while this time the SbF-Pruner layers are frozen and we use the classification loss. In this phase, we use Equation~\eqref{eq:binary_scores} as the gate function for the filter scores. When the SbF-Pruner layers are training, the feature maps get multiplied by the continuous value of the scores. We repeat these two phases for 10 times, summing up to training for 90 epochs. We use the ADAM optimizer with learning rates $10^-6$ and $10^-3$ for training the SbF-Pruners and the network parameters, respectively.

\begin{table*}[t]
\label{tb:results}
  \centering
  \caption{Experiment results on Cifar10 dataset with ResNet56 and ResNet110.\\ $\Delta$ shows the difference in the accuracy of the base dense model used and that of the pruned network. SbF-Pruner does not use a pre-trained base model. Numbers are taken from the reported results in the cited papers.}
  \vspace*{1.5ex}
  \begin{tabular}{lcccccc}
    \toprule
    \multicolumn{1}{c}{} & \multicolumn{5}{c}{ResNet56}                   \\
   \midrule
    \textbf{Method}     & Baseline Acc($\%$)    & Pruned Acc($\%$)   & $\Delta$($\%$)    & $\downarrow$Parameters($\%$)    & $\downarrow$Flops($\%$)  \\
    \midrule
    $l1$ Norm~\cite{li2017pruning} & 93.04  & 93.06  & +0.02 & 13.7  &  27.6 \\
    NISP~\cite{nisp2018} & N/A & N/A  & -0.03 &  42.2 &  35.5    \\
    SFP~\cite{SFP2018} & 93.59  & 93.78  & +0.19 &  N/A &  41.1    \\
    DCP~\cite{DCP2018} & 93.80  & 93.59  & -0.31 & N/A  &  50.0    \\
    DCP-Adapt~\cite{DCP2018} & 93.80  & 93.81  & +0.01 & N/A  &  47.0    \\
    CCP~\cite{CCP2019} & 93.50  & 93.46  & -0.04 & N/A  &  47.0    \\
    GAL~\cite{GAL2019} & 93.26 & 93.38  & +0.12 & 11.8  &  37.6    \\
    HRank~\cite{HRank2020} & 93.26  & 93.52  & +0.26 & 16.8  &  29.3    \\
    DMC~\cite{DMC2020} & 93.62  & 93.69  & +0.07 & N/A  &   50.0   \\
    NPPM~\cite{NPPM2021} & 93.04  & 93.40  & +0.36 &  N/A &  50.0    \\
    CHIP~\cite{sui2021chip}  & 93.26 & 94.01  & +0.75 & 42.8 & 47.4  \\
    \textbf{Ours}    &  & \textbf{94.28} &  & \textbf{52.3} & 49.3 \\
    \midrule
    AMC~\cite{AMC2018} &  92.80 & 91.90  & -0.90 & N/A  &  50.0    \\
    GAL~\cite{GAL2019} & 93.26 & 91.58  & -1.68 & 65.9  & 60.2    \\
    HRank~\cite{HRank2020} & 93.26  & 90.72  & -2.54 & 68.1  &  74.1    \\
    CHIP~\cite{sui2021chip}  & 93.26 & 92.05  & -1.21 & 71.8 & 72.3  \\
    \textbf{Ours}    &  & \textbf{92.42} &  & \textbf{83.0} & 70.5 \\
    \midrule
    \multicolumn{1}{c}{} & \multicolumn{5}{c}{ResNet110}                   \\
   \midrule
    $l1$ Norm~\cite{li2017pruning} & 93.53  & 93.30  & -0.23 & 32.4  & 38.7 \\
    NISP~\cite{nisp2018} & N/A & N/A  & -0.18 &  43.78 &  43.25    \\
    SFP~\cite{SFP2018} & 93.68  & 93.86  & +0.18 &  N/A &  40.8    \\
    GAL~\cite{GAL2019} & 93.50 & 93.59  & +0.09 & 18.7  &  4.1    \\
    HRank~\cite{HRank2020} & 93.50  & 94.23  & +0.73 & 39.4  &  41.2    \\
    CHIP~\cite{sui2021chip} & 93.50 & 94.44  & +0.94 & 48.3 & 52.1  \\
    \textbf{Ours}    &  & \textbf{94.69} &  & \textbf{54.9} & 51.3 \\
    \midrule
    GAL~\cite{GAL2019} & 93.50 & 92.74  & -0.76 & 44.8  &  48.5    \\
    HRank~\cite{HRank2020} & 93.50  & 92.65  & -0.85 & 68.7  &  68.6    \\
    CHIP~\cite{sui2021chip} & 93.50 & 93.63  & +0.13 & 68.3 & 71.3  \\
    \textbf{Ours}    &  & \textbf{93.68} &  & \textbf{79.3} & \textbf{74.8} \\
    \bottomrule
  \end{tabular}
  \label{light0-table}
  \vspace*{-2ex}
\end{table*}

After the scores-training stage is finished, we start fine-tuning the models. In this stage, we remove all SbF-Pruner layers from the network, keeping the final score vectors trained for each layer. We use Equation~\eqref{eq:binary_scores} to select the binary values of scores, which is our ultimate goal when pruning. Each feature map either is removed entirely (having $0$ score), or is completely preserved (having $1$ score). We use SGD optimizer with the same parameters as the warm-up stage, and tune the networks for 300 epochs. 

\textbf{Balancing the Pruned Parameters and Flops.} When training the SbF-Pruner, we use the regularized loss to guide the scores to our desired budget. However, the number of parameters of each layer is different from the number of flops required for that layer:
\begin{equation}
\centering
\label{eq:params}
\begin{split}
(Params)_l = C_l\times F_l\times K_l \times K_l\\
(Flops)_l = C_l\times F_l\times K_l \times K_l \times W_l \times H_l
\end{split}
\end{equation}
Since the number of flops is also dependent on the image size, early convolutional layers before the max-pooling layers require more flops as the image sizes are still large. In order to keep the balance between pruned parameters and flops, we multiply the $l_1$ norm of each layer by the relative input image sizes of that layer to the last layers.

For experiments on ResNet56, we used $\lambda=5\times 10^{-4}$ and $\lambda=15\times 10^{-4}$ for pruning ratios of $52.3\%$ and $83.0\%$ respectively. On ResNet110, we set $\lambda$ to $2\times 10^{-4}$ and $5.5\times 10^{-4}$ to prune $54.9\%$ and $79.3\%$ of the network parameters.

\subsection{Performance}
\textbf{Experimental results.}
Table \ref{tb:results} shows our experimental results. We compare the performance of the SbF-Pruner with the sate-of-the-art filter pruning methods. As one can see, the SbF-Pruner outperforms other pruning methods, achieving higher accuracy while pruning more parameters. Specifically, on ResNet56 the SbF-Pruner results in an increase in accuracy even compared to the dense baselines while pruning $52.3\%$ of parameters and $49.3\%$ of flops. 

The same is true for ResNet110, where the SbF-Pruner achieves an accuracy of $94.69\%$ while it prunes $54.9\%$ of the parameters and $51.3\%$ of the flops. For higher pruning ratios, the SbF-Pruner outperforms CHIP~\cite{sui2021chip}, the next best method, by $0.37\%$ accuracy increase while pruning $12.8\%$ more parameters, with a total of $83.0\%$ of pruned parameters on ResNet56. Similarly, on ResNet110, the SbF-Pruner achieves higher accuracy than CHIP with $11\%$ more parameters pruned.

\textbf{Per-layer Budget Discovery.} The SbF-Pruner finds the optimal sparse sub-networks in a fully-automated pipeline and does not require budget allocation schedules per layer. Figure~\ref{fig:budgets} shows the discovered networks in our experiments from ResNet56 and ResNet110.

\begin{figure*}[t]
\label{fig:budgets}
\centering
   \includegraphics[width=0.99\textwidth]{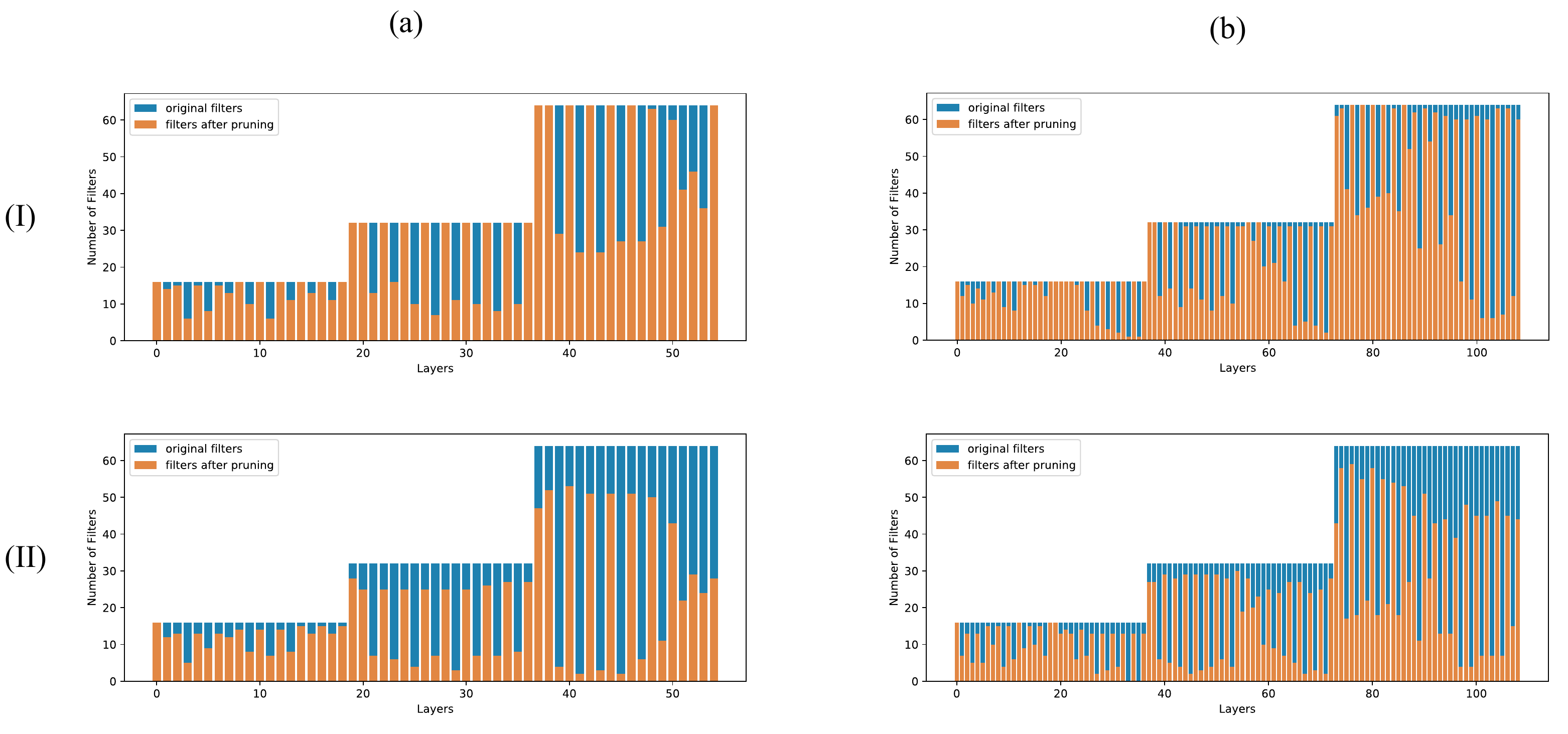}
   %\vspace*{-1ex}
   \caption{Networks discovered by the SbF-Pruner from a) ResNet56 and b) ResNet110 with medium (I) and (II) high pruning ratios. The SbF-Pruner automatically discovers the optimal per-layer budgets and does not require any pre-defined budgets.}
    \label{fig:att_pruner_bottlenecks}
   
\end{figure*}

In each residual block of the sub-networks learned by the SbF-Pruner, the first layer has lower number of filters remaining after pruning, compared to the second layer. This structure is similar to the bottle-neck architecture present in ResNets with large number of layers. It enables the network to concentrate on the most important features with less capacity, which is exactly what we are looking for with pruning. Many existing pruning algorithms use similar bottle-neck structures, when manually defining layer budgets for pruning~\cite{sui2021chip, HRank2020}. The SbF-Pruner is able to discover this pattern automatically, without supervision. Moreover, for high pruning ratios on ResNet110, there are blocks emerging with $0$ remaining filters in the first layers. This shows that the SbF-Pruner can also remove entire layers when required, having more freedom in the possible sparse sub-networks search space.

\section{Related Work}
Filter pruning is a very popular structured method for sparsifying CNNs, which supports storage reduction and processing efficiency without requiring any special library. One can roughly classify existing filter-pruning methods into three main categories based on their selection approaches: Data-free, data-informed, and training-aware methods.

\textbf{Data-Free Filter Pruning.} 
Following-up on the weights-magnitude-pruning method, where the weights with the smallest absolute values are considered the least important,~\cite{li2017pruning} use the sum of absolute values of the weights in a filter (the $l1$ norm) to prune the filters with the smallest weight values. ~\cite{SFP2018} dynamically prune the filters with the smallest $l2$ norm value in each epoch by setting them to zero and repeat this in each epoch during training. ~\cite{he2019filterFPGM} use the geometric median of filters as the pruning criterion. Although data-free methods can gain acceptable performance levels, several later works have shown that considering the training data will notably improve pruning precision~\cite{hoefler2021sparsity}. 

\textbf{Data-Informed Filter Pruning.}
Many pruning methods focus on the feature maps since they provide rich information from the data distribution as well as the filters. ~\cite{HRank2020} prune the filters whose feature maps have the lowest ranks, and ~\cite{liebenwein2020provable} use a sensitivity measure to prune filters with lowest effect on the outputs, providing provable sparsity guaranties. Motivated by the importance of inter-channel perspective for pruning,~\cite{sui2021chip} use the nuclear norm of the feature maps as an independence metric to prune the filters whose feature maps are the most dependent on the others.

\textbf{Training-Aware Filter Pruning.}
These methods use the power of training to learn a filter importance metric or guide the network to a sparse structure. Based on the idea of magnitude pruning, some methods add regularization factors to the loss to directly guide filters to close to zero values. ~\cite{NIPS2016_41bfd20a} and ~\cite{louizos2018learning} use Group Lasso and $l0$ regularization, respectively. Instead of solely relying on the weight magnitudes,~\cite{DCP2018} proposes a discrimination-aware channel-pruning method by defining per-layer classification losses. ~\cite{DMC2020} train binary gate functions with Straight Through Estimators and ~\cite{NPPM2021} focus on training binary gates by directly maximizing the accuracy of subnetworks.

No method discussed above however, is able to directly learn the importance-scores from the filter-weights, extract hidden correlations among filters, automatically calculate global importance scores for all filters and determine layer-specific budgets, all at the same time during training, thus taking advantage of both data-free and data-informed methods.

\section{Conclusion}
We proposed the SbF-Pruner an end-to-end sensitivity-based filter-pruning algorithm that learns importance scores via gradient descent. In contrast to a large spectrum of advanced pruning algooorithms, ours does not require a baseline pretrained network to prune from. It rather sparsifies dense networks from scratch, through cycles of gradient descent. 

We showed comprehensively that much better compression rates are achievable through the use of the SbF-Pruner for residual networks, while maintaining a competitively high accuracy. The SbF-Pruner computes global importance scores for filters and automatically associates pruning budgets to a neural network's layers with a single hyperparameter. This way, the algorithm takes advantage of both the data-free and the data-informed methods. 

We hope that future work could begin using our proposed SbF-Pruner framework in resource-hungry applications domains, such as neural architecture search \cite{mellor2021neural}, and obtain compressed neural models endowed with salient features, automatically distilled during training, for resource-constraint environments. 

\newpage
\bibliography{references}
\bibliographystyle{icml2022}

%%%%%%%%%%%%%%%%%%%%%%%%%%%%%%%%%%%%%%%%%%%%%%%%%%%%%%%%%%%%%%%%%%%%%%%%%%%%%%%
%%%%%%%%%%%%%%%%%%%%%%%%%%%%%%%%%%%%%%%%%%%%%%%%%%%%%%%%%%%%%%%%%%%%%%%%%%%%%%%
% APPENDIX
%%%%%%%%%%%%%%%%%%%%%%%%%%%%%%%%%%%%%%%%%%%%%%%%%%%%%%%%%%%%%%%%%%%%%%%%%%%%%%%
%%%%%%%%%%%%%%%%%%%%%%%%%%%%%%%%%%%%%%%%%%%%%%%%%%%%%%%%%%%%%%%%%%%%%%%%%%%%%%%
%\newpage
%\appendix
%\onecolumn
%\section{You \emph{can} have an appendix here.}

%%%%%%%%%%%%%%%%%%%%%%%%%%%%%%%%%%%%%%%%%%%%%%%%%%%%%%%%%%%%%%%%%%%%%%%%%%%%%%%
%%%%%%%%%%%%%%%%%%%%%%%%%%%%%%%%%%%%%%%%%%%%%%%%%%%%%%%%%%%%%%%%%%%%%%%%%%%%%%%

\end{document}